\def\eqref#1{equation~\ref{#1}}
\def\1{\bm{1}}
\def\vp{{\bm{p}}}
\def\vq{{\bm{q}}}
\DeclareMathAlphabet{\mathsfit}{\encodingdefault}{\sfdefault}{m}{sl}
\SetMathAlphabet{\mathsfit}{bold}{\encodingdefault}{\sfdefault}{bx}{n}
\def\sC{{\mathbb{C}}}
\def\sN{{\mathbb{N}}}
\def\sS{{\mathbb{S}}}
\newcommand{\E}{\mathbb{E}}
\newcommand{\R}{\mathbb{R}}
\newcommand{\Var}{\mathrm{Var}}
\renewcommand\paragraph{\@startsection{paragraph}{4}{\z@}                                     {3mm}                                     {-1em}                                    {\normalfont\normalsize\bfseries}} \makeatother
\newlength{\dhatheight}
\newcommand{\doublehat}[1]{%
\settoheight{\dhatheight}{\ensuremath{\hat{#1}}}%
\addtolength{\dhatheight}{-0.35ex}%
\hat{\vphantom{\rule{1pt}{\dhatheight}}%
\smash{\hat{#1}}}}
\newtheorem{lemma}{Lemma}
\newtheorem{theorem}{Theorem}
\newcommand{\bracket}[1]{\left[#1\right]}
\renewcommand{\brace}[1]{\left(#1\right)}
\newcommand{\quantizer}[0]{\textrm{\fontfamily{cmtt}\selectfont{Q}}}
\newcommand{\Lagr}{\mathcal{L}}
\newcommand{\globloss}{G}
\newcommand{\locloss}{\hat{G}}
\newcommand{\raloss}{\doublehat{{G}}}
\newcommand{\textrect}[1]{\protect\tikz\protect\fill[#1](0,0)rectangle(1em,0.5em);}
\newcommand{\dataset}[1]{{\fontfamily{cmss}\selectfont#1}}
\pgfplotsset{
  /pgfplots/xlabel near ticks/.style={
     /pgfplots/every axis x label/.style={
        at={(ticklabel cs:0.5)},anchor=near ticklabel
     }
  },
  /pgfplots/ylabel near ticks/.style={
     /pgfplots/every axis y label/.style={
        at={(ticklabel cs:0.5)},rotate=0,anchor=near ticklabel}
     }
  }
\title{DAdaQuant: Doubly-adaptive quantization for communication-efficient Federated Learning}
\author{Robert Hönig, Yiren Zhao, Robert Mullins \\
Department of Computer Science\\
University of Cambridge\\
\texttt{\{rh723,yiren.zhao,robert.mullins\}@cl.cam.ac.uk} \\
}
\begin{document}



\maketitle

\begin{abstract}
Federated Learning (FL) is a powerful technique for training a model on a
server with data from several clients in a privacy-preserving manner. In FL,
a server sends the model to every client, who then train the model locally
and send it back to the server. The server aggregates the updated models and
repeats the process for several rounds. FL incurs significant communication
costs, in particular when transmitting the updated local models from the
clients back to the server. Recently proposed algorithms quantize the model
parameters to efficiently compress FL communication. These algorithms
typically have a quantization level that controls the compression factor. We
find that dynamic adaptations of the quantization level can boost
compression without sacrificing model quality. First, we introduce a
time-adaptive quantization algorithm that increases the quantization level
as training  progresses. Second, we introduce a client-adaptive quantization
algorithm that assigns each individual client the optimal quantization level
at every round. Finally, we combine both algorithms into DAdaQuant, the
doubly-adaptive quantization algorithm. Our experiments show that DAdaQuant
consistently improves client$\rightarrow$server compression, outperforming
the strongest non-adaptive baselines by up to $2.8\times$.

\end{abstract}

\section{Introduction}  

Edge devices such as smartphones, remote sensors and smart home appliances
generate massive amounts of data \citep{wang2018smart, cao2017deepmood, shi2016promise}. In recent years, Federated Learning (FL)
has emerged as a technique to train models on this data while
preserving privacy \citep{FedAvg,FedProx}.

In FL, we have a single server that is connected to many clients. Each
client stores a local dataset that it does not want to share with the server
because of privacy concerns or law enforcement \citep{voigt2017eu}. The server wants to train a model on all local
datasets. To this end, it initializes the model and sends it to a random
subset of clients. Each client trains the model on its local dataset and
sends the trained model back to the server. The server accumulates all
trained models into an updated model for the next iteration and repeats the
process for several rounds until some termination criterion is met. This
procedure enables the server to train a model without accessing any local
datasets. 

Today's neural network models often have millions or even billions
\citep{gpt3} of parameters, which makes high communication costs a
concern in FL. In fact, \citet{carbon} suggest that communication between clients
and server may account for over 70\% of energy consumption in FL. Reducing
communication in FL is an attractive area of research because it lowers
bandwidth requirements, energy consumption and training time.

Communication in FL occurs in two phases: Sending parameters from the
server to clients (\emph{downlink}) and sending updated parameters from
clients to the server (\emph{uplink}). Uplink bandwidth usually imposes a
tighter bottleneck than downlink bandwidth. This has several reasons. For
one, the average global mobile upload bandwidth is currently less than one
fourth of the download bandwidth \citep{speedtest}. For another, FL downlink
communication sends the same parameters to each client. Broadcasting
parameters is usually more efficient than the accumulation of parameters
from different clients that is required for uplink communication \citep{LFL,
FedPAQ}. For these reasons, we  seek to compress uplink communication.

\begin{figure}[h]
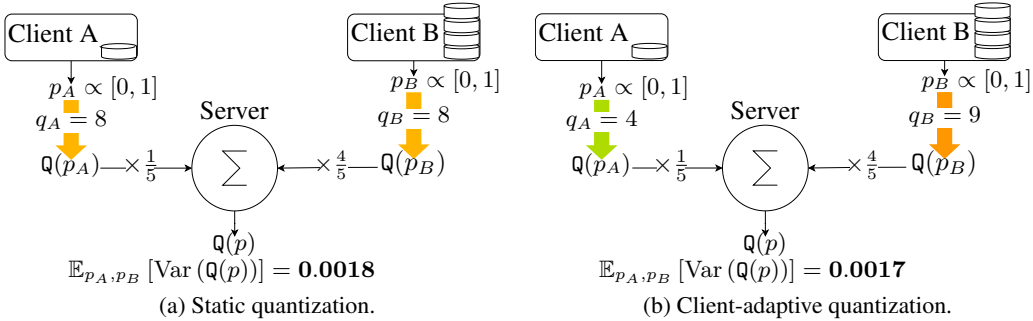
 
  \begin{subfigure}[b]{0.50\textwidth}
    \def\svgwidth{18em}
    \input{samplequant_complex_static.pdf_tex}
    \caption{Static quantization.}
    \label{fig:static_quantization}
  \end{subfigure}  
  \hfill
  \begin{subfigure}[b]{0.50\textwidth}
    \def\svgwidth{18em}
    \input{samplequant_complex_dynamic.pdf_tex}
    \label{fig:dynamic_quantization}
    \caption{Client-adaptive quantization.}
  \end{subfigure} 
  \caption{Static quantization vs. client-adaptive quantization when
  accumulating parameters $p_A$ and $p_B$. (a): Static quantization uses the
  same quantization level for $p_A$ and $p_B$. (b) Client-adaptive
  quantization uses a slightly higher quantization level for $p_B$ because
  $p_B$ is weighted more heavily. This allows us to use a significantly lower quantization level $q_A$ for $p_A$ while keeping the quantization error measure
  $\mathrm{E}_{p_A,p_B}\left[\mathrm{Var}\left(\quantizer(p)\right)\right]$
  roughly constant. Since communication is approximately proportional to
  $q_A + q_B$, client-adaptive quantization communicates less data.}
  \label{fig:clientdynamicquant}
  \vspace*{-2mm}
\end{figure}

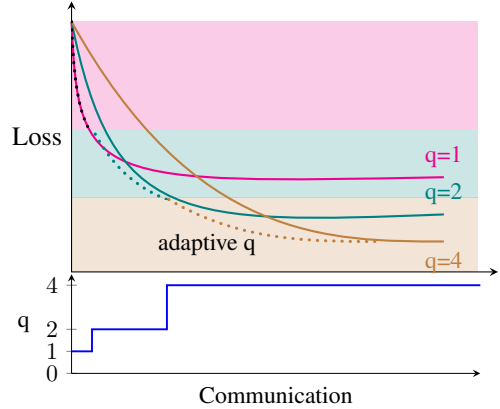
\begin{wrapfigure}{r}{0.49\textwidth}
  \vspace*{-5mm}
  \begin{tikzpicture}[scale=0.9]
    \fill [magenta, opacity=0.2] (0,2.1) rectangle (6, 3.7);
    \fill [teal, opacity=0.2] (0,1.1) rectangle (6, 2.1);
    \fill [brown, opacity=0.2] (0,0) rectangle (6, 1.1);
    \draw [<->, >=stealth] (0,4) -- (0,0) -- (6.3,0);
    \node [left] at (0,2) {Loss};
    \draw [magenta, thick](0,3.7) .. controls (0.1,1.3) and (0.3,1.3) .. (5.5, 1.4) node [above, magenta] {\small q=1};
    \draw [teal, thick,](0,3.7) .. controls (0.6,0.7) and (1.6,0.7) .. (5.5, 0.85) node [above, teal] {\small q=2};
    \draw [brown, thick](0,3.7) .. controls (1.8,0.45) and (3.8,0.45) .. (5.5, 0.45) node [below, brown] {\small q=4};
    \begin{scope}
      \clip (0, 2.1) rectangle (6, 4);
      \draw [black, very thick, dotted, line cap=round, dash pattern=on 0pt off 2.5\pgflinewidth](0,3.7) .. controls (0.1,1.3) and (0.3,1.3) .. (5.5, 1.4);
    \end{scope}
    \begin{scope}[shift={(-1.02,0.00)}]
      \clip (0, 0) rectangle (6, 1.1);
      \draw [brown, dotted, very thick, line cap=round, dash pattern=on 0pt off 2.5\pgflinewidth](0,4) .. controls (1.8,0.45) and (3.8,0.45) .. (5.5, 0.45);
    \end{scope}
    \begin{scope}[shift={(-0.245,-0.00)}]
      \clip (0, 1.065) rectangle (6, 2.1);
      \draw [teal, very thick, dotted, line cap=round, dash pattern=on 0pt off 2.5\pgflinewidth](0,4) .. controls (0.6,0.7) and (1.6,0.7) .. (5.5, 0.85);
    \end{scope}
    \node [below] at (2, 0.7) {\small adaptive q};

    \begin{scope}[shift={(0,-1.5)}]
      \begin{axis}[%
        ,xlabel=Communication
        ,xlabel near ticks
        ,ylabel near ticks
        ,ylabel=q
        ,axis x line = bottom,axis y line = left
        ,ytick={0, 1,2,4}
        ,xtick=\empty
        ,ymax=4.5 
        ,ymin=0
        ,width=3in
        ,height=1.2in
      ]
      \addplot+[const plot, no marks, thick] coordinates {(0,1) (0.15,1) (0.15,2) (0.7,2) (0.7,4) (3,4)};
      \end{axis}
    \end{scope}

  \end{tikzpicture} 
  \caption{Time-adaptive quantization. A small quantization level (q)
  decreases the loss with less communication than a large q, but converges
  to a higher loss. This motivates an adaptive quantization strategy that uses a small q as long as 
  it is beneficial and then switches over to a large q. We generalize this idea into an algorithm
  that monotonically increases q based on the training loss.}
  \label{fig:timedynamicquant}
\end{wrapfigure}

A large class of compression algorithms for FL apply some lossy quantizer $\quantizer$, optionally
followed by a lossless compression stage. $\quantizer$ usually provides a ``quantization level''
hyperparameter $q$ to control the coarseness of quantization (e.g. the number of bins
for fixed-point quantization). When $q$ is kept constant during training, we speak of
\emph{static quantization}. When $q$ changes, we speak of \emph{adaptive quantization}. Adaptive
quantization can exploit asymmetries in the FL framework to minimize communication. One such
asymmetry lies in FL's training time, where we observe that early training rounds can use a lower
$q$ without affecting convergence. \Cref{fig:timedynamicquant} illustrates how \emph{time-adaptive
quantization} leverages this phenomenon to minimize communication. Another asymmetry lies in FL's
client space, because most FL algorithms weight client contributions to the global model
proportional to their local dataset sizes. \Cref{fig:clientdynamicquant} illustrates how
\emph{client-adaptive quantization} can minimize the quantization error. Intuitively, FL clients
with greater weighting should have a greater communication budget and our proposed client-adaptive
quantization achieves this in a principled way. To this end, we introduce the expected variance of
an accumulation of quantized parameters, $\E[\Var(\sum\quantizer(p))]$, as a measure of the
quantization error. Our client-adaptive quantization algorithm then assigns clients minimal
quantization levels, subject to a fixed $\E[\Var(\sum\quantizer(p))]$. This lowers the amount of
data communicated from clients to the server, without increasing the quantization error.

DAdaQuant (Doubly Adaptive Quantization) combines time- and client-adaptive quantization with an adaptation
of the QSGD fixed-point quantization algorithm to achieve state-of-the-art
FL uplink compression.
In this paper, we make the following contributions:

\begin{itemize}[noitemsep,leftmargin=*] 
  \item We introduce the concept of client-adaptive quantization and develop algorithms for time-
  and client-adaptive quantization that are computationally efficient, empirically Pareto optimal
  and compatible with arbitrary FL quantizers. Our client-adaptive quantization is provably optimal
  for stochastic fixed-point quantizers. 
  
  \item We create Federated QSGD as an adaptation of the stochastic
  fixed-point quantizer QSGD that works with FL. Federated QSGD outperforms
  all other quantizers, establishing a strong baseline for FL compression with
  static quantization.

  \item We combine time- and client-adaptive quantization into DAdaQuant. We
  demonstrate DAdaQuant's state-of-the-art compression by empirically
  comparing it against several competitive FL compression
  algorithms.
\end{itemize}


\section{Related Work}

FL research has explored several approaches to reduce communication. We
identify three general directions.

First, there is a growing interest of
investigating FL algorithms that can converge in fewer rounds. FedAvg
\citep{FedAvg} achieves this with prolonged local training, while FOLB
\citep{folb} speeds up convergence through a more principled client
sampling. Since communication is proportional to the number of training
rounds, these algorithms effectively reduce communication.

Secondly, communication can be reduced by reducing the model
size because the model size is proportional to the amount of training communication.
PruneFL \citep{fedprune} progressively prunes the model over the course of
training, while AFD \citep{feddropout} only trains submodels on clients.

Thirdly, it is possible to directly compress FL training communication. FL
compression algorithms typically apply techniques like top-k sparsification
\citep{fedzip, fetchsgd} or quantization \citep{FedPAQ, uveqfed} to
parameter updates, optionally followed by lossless compression. Our work
applies to quantization-based compression algorithms. It is partially based
on QSGD \citep{QSGD}, which combines lossy fixed-point quantization with a
lossless compression algorithm to compress gradients communicated in
distributed training. DAdaQuant adapts QSGD into Federated QSGD, which works
with Federated Learning. DAdaQuant also draws inspiration from FedPAQ
\citep{FedPAQ}, the first FL framework to use lossy compression based on
model parameter update quantization. However, FedPAQ does not explore the
advantages of additional lossless compression or adaptive quantization.
UVeQFed \citep{uveqfed} is an FL compression algorithm that generalizes
scalar quantization to vector quantization and subsequently employs lossless
compression with arithmetic coding. Like FedPAQ, UVeQFed also limits itself
to a single static quantization level.

Faster convergence, model size reduction and communication compression are
orthogonal techniques, so they can be combined for further communication
savings. For this paper, we limit the scope of empirical comparisons to
quantization-based FL compression algorithms.

For quantization-based compression for model training, prior works have
demonstrated that DNNs can be successfully trained in low-precision
\citep{banner2018scalable,gupta2015deep,sun2019hybrid}. There are also
several adaptive quantization algorithms for training neural networks in a
non-distributed setting. \Citet{adaparams} use different quantization levels
for different parameters of a neural network. FracTrain \citep{FracTrain}
introduced multi-dimensional adaptive quantization by developing
time-adaptive quantization and combining it with parameter-adaptive
quantization. However, FracTrain uses the current loss to decide on the
quantization level. FL generally can only compute local client losses that
are too noisy to be practical for FracTrain. AdaQuantFL introduces
time-adaptive quantization to FL, but requires the global loss
\citep{adaquantfl}. To compute the global loss, AdaQuantFL has to
communicate with every client each round. We show in
\Cref{sec:experimentsresults} that this quickly becomes impractical as the
number of clients grows. DAdaQuant's time-adaptive quantization overcomes
this issue without compromising on the underlying FL communication. In
addition, to the best of our knowledge, DAdaQuant is the first algorithm to
use client-adaptive quantization.

\section{The DAdaQuant method}

\subsection{Federated Learning}

Federated Learning assumes a client-server topology with a set $\sC =
\{c_i|i \in \{1,2...N\}\}$ of $N$ clients that are connected to a single server. Each client
$c_k$ has a local dataset $D_k$ from the local data distribution
$\mathcal{D}_k$.
Given a model $M$ with parameters $\vp$, a loss function
$f_{\vp}(d\in D_k)$ and the local loss $F_k(\vp) =
\frac{1}{|D_k|}\sum_{d \in D_k} f_\vp(d)$, FL seeks to minimize the global
loss $\globloss(\vp) = \sum_{k=1}^{N} \frac{|D_k|}{\sum_l|D_l|}F_k(\vp)\label{eq:flobjective}$.

\subsection{Federated Averaging (FedAvg)}

DAdaQuant makes only minimal assumptions about the FL algorithm. Crucially,
DAdaquant can complement FedAvg \citep{FedAvg}, which is representative of a
large class of FL algorithms.

FedAvg trains the model $M$ over several rounds. In each round $t$, FedAvg
sends the model parameters $\vp_t$ to a random subset $\sS_t$ of $K$ clients
who then optimize their local objectives $F_k(\vp_t)$ and send the updated
model parameters $\vp_{t+1}^k$ back to the server. The server accumulates
all parameters into the new global model $\vp_{t+1} = \sum_{k\in\sS_t}
\frac{|D_k|}{\sum_j |D_j|} \vp_{t+1}^{k}$\; and starts the next round.
\Cref{alg:fedavgdadaquant} lists FedAvg in detail. For our experiments, we
use the FedProx \citep{FedProx} adaptation of FedAvg. FedProx improves the
convergence of FedAvg by adding the proximal term
$\frac{\mu}{2}\|\vp_{t+1}^k - \vp_t\|^2$ to the local objective
$F_k(\vp_{t+1}^k)$ in \Cref{alg:fedavgobjective} of
\Cref{alg:fedavgdadaquant}.

\subsection{Quantization with Federated QSGD}
\label{sec:qsgd}

While DAdaQuant can be applied to any quantizer with a configurable
quantization level, it is optimized for fixed-point quantization. We
introduce Federated QSGD as a competitive fixed-point quantizer on top of
which DAdaQuant is applied. 

In general, fixed-point quantization uses a quantizer $\quantizer_q$ with
quantization level $q$ that splits $\R_{\geq0}$ and $\R_{\leq0}$ into $q$ intervals each.
$\quantizer_q(p)$ then returns the sign of $p$ and $|p|$ rounded to one of
the endpoints of its encompassing interval. $\quantizer_q(\vp)$ quantizes the
vector $\vp$ elementwise.

We design DAdaQuant's quantization stage based on QSGD, an efficient
fixed-point quantizer for state-of-the-art gradient compression.
QSGD quantizes a vector $\vp$ in
three steps:
\begin{enumerate}[noitemsep]
  \item Quantize $\vp$ as $\quantizer_q(\frac{\vp}{||\vp||_2})$ into $q$ bins in $[0,1]$, storing signs and $||\vp||_2$ separately. (\emph{lossy})
  \item Encode the resulting integers with 0 run-length encoding. (\emph{lossless})
  \item Encode the resulting integers with Elias $\omega$ coding. (\emph{lossless})
\end{enumerate}

QSGD has been designed specifically for quantizing gradients. This makes it
not directly applicable to parameter compression. To overcome this
limitation, we apply difference coding to uplink compression, first
introduced to FL by FedPAQ. Each client $c_k$ applies $\quantizer_q$ to the
\emph{parameter updates} $\vp^k_{t+1}-\vp_t$ (cf.
\Cref{alg:fedavgclientreturn} of \Cref{alg:fedavgdadaquant}) and sends them to the server. The server
keeps track of the previous parameters $\vp_t$ and accumulates the quantized
parameter updates into the new parameters as $\vp_{t+1} = \vp_t + \sum_{k\in\sS_t}
\frac{|D_k|}{\sum_l |D_l|} \quantizer_q(\vp^k_{t+1}-\vp_t)$ (cf.
\Cref{alg:fedavgaccumulate} of \Cref{alg:fedavgdadaquant}). We find that QSGD works well with parameter
updates, which can be regarded as an accumulation of gradients over several
training steps. We call this adaptation of QSGD \emph{Federated QSGD}.

\subsection{Time-adaptive quantization}

Time-adaptive quantization uses a different quantization level $q_t$ for
each round $t$ of FL training. DAdaQuant chooses $q_t$ to minimize
communication costs without sacrificing accuracy. To this end, we find that
lower quantization levels suffice to initially reduce the loss, while partly
trained models require higher quantization levels to further improve (as illustrated in \Cref{fig:timedynamicquant}).
FracTrain is built on similar observations for non-distributed training.  
Therefore, we design DAdaQuant to mimic FracTrain in monotonically
increasing $q_t$ as a function of $t$ and using the training loss to inform
increases in $q_t$.

When $q$ is too low, FL converges prematurely. Like FracTrain, DAdaQuant
monitors the FL loss and increases $q$ when it converges. Unlike FracTrain,
there is no single centralized loss function to evaluate and unlike
AdaQuantFL, we do not assume availability of global training loss
$\globloss(\vp_t)$. Instead, we estimate $\globloss(\vp_t)$ as the average
local loss $\locloss_t = \sum_{k\in\sS_t} \frac{|D_k|}{\sum_l
|D_l|}F_k(\vp_t)$ where $\sS_t$ is the set of clients sampled at round $t$.
Since $\sS_t$ typically consists of only a small fraction of all clients,
$\locloss_t$ is a very noisy estimate of $\globloss(\vp_t)$. This makes it
unsuitable for convergence detection. Instead, DAdaQuant tracks a running
average loss $\raloss_{t} = \psi \raloss_{t-1} + (1-\psi) \locloss_{t}$.

We initialize $q_1 = q_\text{min}$ for some $q_\text{min} \in \sN$.
DAdaQuant determines training to converge whenever $\raloss_{t} \geq
\raloss_{t+1-\phi}$ for some $\phi \in \sN$ that specifies the number of
rounds across which we compare $\raloss$. On convergence, DAdaQuant sets
$q_{t} = 2q_{t-1}$ and keeps the quantization level fixed for at least
$\phi$ rounds to enable reductions in $\globloss$ to manifest in $\raloss$.
Eventually, the training loss converges regardless of the quantization
level. To avoid unconstrained quantization increases on convergence, we
limit the quantization level to $q_\text{max}$.

The following equation
summarizes DAdaQuant's time-adaptive quantization:
$$
q_{t} \longleftarrow
\begin{cases}
  q_{\text{min}} & t = 0 \\
  2q_{t-1} & t > 0 \text{ and } \raloss_{t-1} \geq \raloss_{t-\phi} \text{ and } t > \phi \text{ and } 2q_{t-1} < q_{\text{max}} \text{ and } q_{t-1} = q_{t-\phi} \\
  q_{t-1} & \text{else}
\end{cases}
$$

\begin{figure}[]\small\centering
\begin{subtable}[]{0.492\textwidth}
\begin{tabular}{l|l|lllll}
\multicolumn{2}{r|}{Round} & \multicolumn{1}{l|}{1}            & \multicolumn{1}{l|}{2}            & \multicolumn{1}{l|}{3}            & \multicolumn{1}{l|}{4}            & \multicolumn{1}{l|}{5}            \\ \hline
Client      & Samples      & \multicolumn{5}{c}{Quantization level}                                                                                                                                            \\ \hline
A           & 1            & \multicolumn{1}{l|}{}             & \multicolumn{1}{l|}{}             & \multicolumn{1}{l|}{}             & \multicolumn{1}{l|}{\gradient{8}} & \multicolumn{1}{l|}{}             \\
B           & 2            & \multicolumn{1}{l|}{\gradient{8}} & \multicolumn{1}{l|}{\gradient{8}} & \multicolumn{1}{l|}{\gradient{8}} & \multicolumn{1}{l|}{\gradient{8}} & \multicolumn{1}{l|}{}             \\
C           & 3            & \multicolumn{1}{l|}{\gradient{8}} & \multicolumn{1}{l|}{}             & \multicolumn{1}{l|}{\gradient{8}} & \multicolumn{1}{l|}{}             & \multicolumn{1}{l|}{\gradient{8}} \\
D           & 4            & \multicolumn{1}{l|}{}             & \multicolumn{1}{l|}{\gradient{8}} & \multicolumn{1}{l|}{}             & \multicolumn{1}{l|}{}             & \multicolumn{1}{l|}{\gradient{8}}
\end{tabular}
\caption{Static quantization.}
\end{subtable}\hfill
\begin{subtable}[h]{0.492\textwidth}
\begin{tabular}{l|l|lllll}
\multicolumn{2}{r|}{Round} & \multicolumn{1}{l|}{1}            & \multicolumn{1}{l|}{2}            & \multicolumn{1}{l|}{3}            & \multicolumn{1}{l|}{4}            & \multicolumn{1}{l|}{5}            \\ \hline
Client      & Samples      & \multicolumn{5}{c}{Quantization level}                                                                                                                                            \\ \hline
A           & 1            & \multicolumn{1}{l|}{}             & \multicolumn{1}{l|}{}             & \multicolumn{1}{l|}{}             & \multicolumn{1}{l|}{\gradient{4}} & \multicolumn{1}{l|}{}             \\
B           & 2            & \multicolumn{1}{l|}{\gradient{1}} & \multicolumn{1}{l|}{\gradient{2}} & \multicolumn{1}{l|}{\gradient{2}} & \multicolumn{1}{l|}{\gradient{4}} & \multicolumn{1}{l|}{}             \\
C           & 3            & \multicolumn{1}{l|}{\gradient{1}} & \multicolumn{1}{l|}{}             & \multicolumn{1}{l|}{\gradient{2}} & \multicolumn{1}{l|}{}             & \multicolumn{1}{l|}{\gradient{8}} \\
D           & 4            & \multicolumn{1}{l|}{}             & \multicolumn{1}{l|}{\gradient{2}} & \multicolumn{1}{l|}{}             & \multicolumn{1}{l|}{}             & \multicolumn{1}{l|}{\gradient{8}}
\end{tabular}
\caption{Time-adaptive quantization.}
\end{subtable}

  \begin{subtable}[h]{0.492\textwidth}
  \begin{tabular}{l|l|lllll}
  \multicolumn{2}{r|}{Round} & \multicolumn{1}{l|}{1}            & \multicolumn{1}{l|}{2}            & \multicolumn{1}{l|}{3}            & \multicolumn{1}{l|}{4}            & \multicolumn{1}{l|}{5}            \\ \hline
  Client      & Samples      & \multicolumn{5}{c}{Quantization level}                                                                                                                                            \\ \hline
  A           & 1            & \multicolumn{1}{l|}{}             & \multicolumn{1}{l|}{}             & \multicolumn{1}{l|}{}             & \multicolumn{1}{l|}{\gradient{6}} & \multicolumn{1}{l|}{}             \\
  B           & 2            & \multicolumn{1}{l|}{\gradient{7}} & \multicolumn{1}{l|}{\gradient{6}} & \multicolumn{1}{l|}{\gradient{7}} & \multicolumn{1}{l|}{\gradient{9}} & \multicolumn{1}{l|}{}             \\
  C           & 3            & \multicolumn{1}{l|}{\gradient{9}} & \multicolumn{1}{l|}{}             & \multicolumn{1}{l|}{\gradient{9}} & \multicolumn{1}{l|}{}             & \multicolumn{1}{l|}{\gradient{7}} \\
  D           & 4            & \multicolumn{1}{l|}{}             & \multicolumn{1}{l|}{\gradient{9}} & \multicolumn{1}{l|}{}             & \multicolumn{1}{l|}{}             & \multicolumn{1}{l|}{\gradient{9}}
  \end{tabular}\hspace{0.47em}
  \caption{Client-adaptive quantization.}
  \end{subtable}
  \begin{subtable}[h]{0.492\textwidth}
  \begin{tabular}{l|l|lllll}
  \multicolumn{2}{r|}{Round} & \multicolumn{1}{l|}{1}            & \multicolumn{1}{l|}{2}            & \multicolumn{1}{l|}{3}            & \multicolumn{1}{l|}{4}            & \multicolumn{1}{l|}{5}            \\ \hline
  Client      & Samples      & \multicolumn{5}{c}{Quantization level}                                                                                                                                            \\ \hline
  A           & 1            & \multicolumn{1}{l|}{}             & \multicolumn{1}{l|}{}             & \multicolumn{1}{l|}{}             & \multicolumn{1}{l|}{\gradient{3}} & \multicolumn{1}{l|}{}             \\
  B           & 2            & \multicolumn{1}{l|}{\gradient{1}} & \multicolumn{1}{l|}{\gradient{1}} & \multicolumn{1}{l|}{\gradient{2}} & \multicolumn{1}{l|}{\gradient{5}} & \multicolumn{1}{l|}{}             \\
  C           & 3            & \multicolumn{1}{l|}{\gradient{1}} & \multicolumn{1}{l|}{}             & \multicolumn{1}{l|}{\gradient{2}} & \multicolumn{1}{l|}{}             & \multicolumn{1}{l|}{\gradient{7}} \\
  D           & 4            & \multicolumn{1}{l|}{}             & \multicolumn{1}{l|}{\gradient{1}} & \multicolumn{1}{l|}{}             & \multicolumn{1}{l|}{}             & \multicolumn{1}{l|}{\gradient{9}}
  \end{tabular}
  \caption{Time-adaptive and client-adaptive quantization.}
  \end{subtable}
  \caption{Exemplary quantization level assignment for 4 FL clients that train over 5 rounds. Each round, two clients get sampled for training.}
  \label{fig:quantexample}
\end{figure}

\subsection{Client-adaptive quantization}

FL algorithms typically accumulate each parameter $p_i$ over all clients
into a weighted average $p = \sum_{i=1}^K{w_ip_i}$ (see \Cref{alg:fedavgdadaquant}).
Quantized FL communicates and accumulates quantized parameters
$\quantizer_q(p) = \sum_{i=1}^K{w_i\quantizer_q(p_i)}$ where $q$ is the
quantization level. We define the quantization error $e^q_p$ as $e^q_p = |p
- \quantizer_q(p)|$. We observe that $\E_{p_1\ldots
p_K}[\Var(\quantizer_q(p))]$ is a useful statistic of the quantization error
because it strongly correlates with the loss added by quantization. For
a stochastic, unbiased fixed-point compressor like Federated QSGD, $\E_{p_1\ldots
p_K}[\Var(\quantizer_q(p))]$ equals $\E_{p_1\ldots p_K}[\Var(e^q_p)]$ and
can be evaluated analytically.

We observe in our experiments that communication cost per client is roughly
a linear function of Federated QSGD's quantization level $q$. This means that the
communication cost per round is proportional to $Q = Kq$. We call $Q$ the
communication budget and use it as a proxy measure of communication cost.

Client-adaptive quantization dynamically adjusts the quantization level of each client. This means
that even within a single round, each client $c_k$ can be assigned a different quantization level $q_k$. The
communication budget of client-adaptive quantization is then $Q = \sum_{k=1}^K{q_k}$ and
$\quantizer_q(p)$ generalizes to $\quantizer_{q_1\ldots q_K}(p) = \sum_{i=1}^K{w_i\quantizer_{q_i}(p_i)}$. We devise an algorithm that chooses $q_k$ to minimize $Q$ subject to $\E_{p_1\ldots
p_K}[\Var(e^{q_1\ldots q_K}_p)] = \E_{p_1\ldots p_K}[\Var(e^q_p)]$ for a given $q$. Thus, our algorithm effectively minimizes
communication costs while maintaining a quantization error similar to static quantization. \Cref{theorem:q} provides us with an analytical formula for quantization
levels $q_1\ldots q_K$. 

\begin{theorem}{Given parameters $p_1\ldots p_k\sim\mathcal{U}[-t, t]$ and quantization level $q$, $\min_{q_1\ldots q_K}\sum_{i=1}^K{q_i}$
  subject to $\E_{p_1\ldots p_K}[\Var(e^{q_1\ldots q_K}_p)] = \E_{p_1\ldots p_K}[\Var(e^q_p)]$ is minimized by $q_i = \sqrt{\frac{a}{b}}\times w_i^{2/3}$ where $a = {\sum_{j=1}^K w_j^{2/3}}$ and $b = {\sum_{j=1}^K \frac{w_j^2}{q^2}}$.}
\label{theorem:q}
\end{theorem}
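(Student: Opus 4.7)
The plan is to reduce the constrained optimization to a standard Lagrange-multiplier problem by first computing the variance expression in closed form, then applying a convexity argument to confirm the critical point is indeed a minimum.

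First, I would compute $\E_{p_1\ldots p_K}[\Var(e^{q_i}_{p_i})]$ for a single client. Because Federated QSGD is an unbiased stochastic fixed-point quantizer with $q_i$ bins on $[0,|p_i|]$ (signs stored separately), the per-element quantization noise has variance proportional to the squared bin width $(p_i/q_i)^2$ up to a bounded factor from the randomized rounding. Averaging over $p_i\sim\mathcal{U}[-t,t]$ therefore gives $\E[\Var(e^{q_i}_{p_i})] = C/q_i^2$ for a constant $C$ that depends only on $t$ and not on $i$ (since all $p_i$ are identically distributed). The precise value of $C$ is irrelevant — it will cancel on both sides of the constraint.

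Next, I would use the fact that the quantization noises on different clients are independent to write
\begin{equation*}
\E_{p_1\ldots p_K}[\Var(e^{q_1\ldots q_K}_p)] = \sum_{i=1}^K w_i^2\,\E[\Var(e^{q_i}_{p_i})] = C\sum_{i=1}^K \frac{w_i^2}{q_i^2},
\end{equation*}
and likewise $\E[\Var(e^q_p)] = C\sum_{i=1}^K w_i^2/q^2 = Cb$. The constraint therefore simplifies (after dividing through by $C$) to $\sum_{i} w_i^2/q_i^2 = b$.

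Then I would set up the Lagrangian $\mathcal{L}(q_1,\dots,q_K,\lambda) = \sum_i q_i - \lambda\bigl(\sum_i w_i^2/q_i^2 - b\bigr)$ and solve $\partial \mathcal{L}/\partial q_i = 1 + 2\lambda w_i^2/q_i^3 = 0$, which yields $q_i \propto w_i^{2/3}$, say $q_i = c\,w_i^{2/3}$. Substituting back into the constraint gives $c^{-2}\sum_i w_i^{2/3} = b$, i.e.\ $c = \sqrt{a/b}$ with $a = \sum_j w_j^{2/3}$, producing the claimed $q_i = \sqrt{a/b}\,w_i^{2/3}$.

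Finally, to confirm this critical point is the global minimum (and not a saddle or maximum), I would note that the feasible set $\{(q_i)_{i=1}^K\in\R_{>0}^K : \sum_i w_i^2/q_i^2 = b\}$ is the level set of a strictly convex function in the variables $u_i = 1/q_i^2$, while the objective $\sum_i q_i$ is strictly convex in each $q_i>0$ and tends to $+\infty$ on the boundary (either as some $q_i\to0$ or as $q_i\to\infty$), so a minimizer exists and the unique interior stationary point must be it. The main obstacle is really just the clean handling of the variance computation in step one — in particular, justifying that the randomized rounding of QSGD yields an unbiased per-coordinate variance of the required $C/q_i^2$ form for uniformly distributed inputs; everything else is routine Lagrange multipliers.
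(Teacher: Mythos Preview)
Your proposal is correct and takes essentially the same approach as the paper: compute the expected variance in the closed form $C\sum_i w_i^2/q_i^2$ (the paper makes this explicit with $C=t^2/6$ via two preliminary lemmas, whereas you correctly observe that the constant cancels in the constraint) and then apply Lagrange multipliers to obtain $q_i\propto w_i^{2/3}$ and fix the constant from the constraint. The only structural difference is that the paper first solves the \emph{dual} problem (minimize variance subject to $\sum_i q_i=Q$, its Lemma~3) and then argues the primal inherits the same $q_i=Cw_i^{2/3}$ form, while you set up the Lagrangian for the primal directly and additionally supply a global-minimum argument that the paper omits; one small slip to fix there is that $\sum_i q_i$ is linear, not strictly convex, but your boundary blow-up plus uniqueness of the interior stationary point already does the work.
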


DAdaQuant applies \Cref{theorem:q} to lower communication costs while
maintaining the same loss as static quantization does with a fixed $q$. To
ensure that quantization levels are natural numbers, DAdaQuant approximates
the optimal real-valued solution as $q_i = \max(1,
\text{round}(\sqrt{\frac{a}{b}}\times w_i^{2/3}))$. \Cref{sec:proofs} gives a detailed proof
of \Cref{theorem:q}. To the best of our knowledge, DAdaQuant is the first algorithm
to use client-adaptive quantization.

\begin{algorithm}[H]
  \SetAlgoLined
  \Fn(){\FServer{}} {
      Initialize $w_i = \frac{|D_i|}{\sum_j |D_j|}$ for all $i\in[1,\ldots,N]$\;
      \For(){$t = 0,\dots,T-1$}{
      Choose $\sS_t \subset \sC$ with $|\sS_t| = K$, including each $c_k \in \sC$ with uniform probability\;
      \colorbox{brown!40}{
        $q_{t} \longleftarrow
        \begin{cases}
          q_{\text{min}} & t = 0 \\
          2q_{t-1} & t > 0 \text{ and } \raloss_{t-1} \geq \raloss_{t-\phi} \text{ and } t > \phi \text{ and } q_{t} \leq q_{\text{max}} \text{ and } q_{t-1} = q_{t-\phi} \\
          q_{t-1} & \text{else}
        \end{cases}$}\;
      \For(in parallel){$c_k \in \sS_t$}{
          \colorbox{magenta!40}{
          $q_t^k \longleftarrow
          \sqrt{\sum_{j=1}^K w_j^{2/3}/\sum_{j=1}^K \frac{w_j^2}{q^2}}
          $
          }\;
          $Send(c_k, \vp_t, ${$q_t^k$}$)$\;    
          $Receive(c_k, \vp_{t+1}^k,${$\locloss_t^k$}$)$\;
      }
      $\vp_{t+1} \longleftarrow \sum_{k\in\sS_t} w_k \vp_{t+1}^{k}$\;
      \label{alg:fedavgaccumulate}
      \colorbox{brown!40}{
        $\locloss_{t} \longleftarrow \sum_{k\in\sS_t} w_k \locloss_t^k$
      }\;
      \colorbox{brown!40}{
        $\raloss_{t} \longleftarrow 
        \begin{cases}
         \locloss_0 & t = 0 \\
        \psi \raloss_{t-1} + (1-\psi) \locloss_{t} & \textrm{else} \\
        \end{cases}$
        }\;
      } 
  }
  \Fn(){\FClient{$c_k$}} {
      $Receive(\textrm{Server}, \vp_t,$\,{$q_t^k$}$)$\;
      \colorbox{brown!40}{$\locloss_t^k \longleftarrow F_k(\vp_{t})$}\;
      $\vp_{t+1}^k \longleftarrow$ $F_k(\vp_{t+1}^k)$ trained with  SGD for $E$ epochs with learning rate $\eta$\;
      \label{alg:fedavgobjective}
      $Send(\textrm{Server},\,$\colorbox{teal!40}{$\quantizer_{q_t^k}(\vp_{t+1}^k)$}$, ${$\locloss_t^k$}$)$\;
      \label{alg:fedavgclientreturn}
  }
  \caption{The FedAvg and DAdaQuant algorithms. The uncolored lines list FedAvg. Adding the colored lines creates DAdaQuant. \textrect{teal} --- quantization, \textrect{magenta} --- client-adaptive quantization, \textrect{brown} --- time-adaptive quantization.}
  \label{alg:fedavgdadaquant}
\end{algorithm}

\subsection{Doubly-adaptive quantization (DAdaQuant)}

DAdaQuant combines the time-adaptive and client-adaptive quantization
algorithms described in the previous sections. At each round $t$,
time-adaptive quantization determines a preliminary quantization level
$q_t$. Client-adaptive quantization then finds the client quantization
levels $q_t^k, k \in \{1, \ldots, K\}$ that minimize $\sum_{i=1}^K{q_i}$
subject to $\E_{p_1\ldots p_K}[\Var(e^{q_1\ldots q_K}_p)] = \E_{p_1\ldots
p_K}[\Var(e^q_p)]$. \Cref{alg:fedavgdadaquant} lists DAdaQuant in detail.
\Cref{fig:quantexample} gives an example of how our time-adaptive,
client-adaptive and doubly-adaptive quantization algorithms set quantization levels.

\Citet{FedPAQ} prove the convergence of FL with quantization for convex
and non-convex cases as long as the quantizer $\quantizer$ is (1) unbiased
and (2) has a bounded variance. These convergence results extend to
DAdaQuant when combined with any quantizer that satisfies (1) and (2) for
DAdaQuant's minimum quantization level $q=1$. Crucially, this includes
Federated QSGD. 

We highlight DAdaQuant's low overhead and general applicability. The
computational overhead is dominated by an additional evaluation epoch per
round per client to compute $\raloss_t$, which is negligible when training
for many epochs per round. DAdaQuant can compliment any FL algorithm that
trains models over several rounds and accumulates a weighted average of
client parameters. Most FL algorithms, including FedAvg, follow this design.

\section{Experiments}
\label{sec:experiments}

\subsection{Experimental details}

\textbf{Evaluation} We use DAdaQuant with Federated QSGD to train
different models with FedProx on different datasets for a fixed number of rounds.
We monitor the test loss and accuracy at fixed intervals and measure
uplink communication at every round across all devices.

\textbf{Models \& datasets} We select a broad and diverse set of five
models and datasets to demonstrate the general applicability of DAdaQuant.
To this end, we use DAdaQuant to train a linear model, CNNs and LSTMs of
varying complexity on a federated synthetic dataset (\dataset{Synthetic}),
as well as two federated image datasets (\dataset{FEMNIST} and
\dataset{CelebA}) and two federated natural language datasets
(\dataset{Sent140} and \dataset{Shakespeare}) from the LEAF \citep{LEAF}
project for standardized FL research. We refer to
\Cref{sec:models_datasets_detailed} for more information on the
models, datasets, training objectives and implementation.

\textbf{System heterogeneity}
In practice, FL has to cope with clients that have different compute
capabilities. We follow \citet{FedProx} and simulate this \emph{system
heterogeneity} by randomly reducing the number of epochs to $E'$ for a
random subset $\sS_t' \subset \sS_t$ of clients at each round $t$, where
$E'$ is sampled from $[1, \ldots, E]$ and $|\sS_t'| = 0.9K$. 

\textbf{Baselines}
We compare DAdaQuant against competing quantization-based algorithms for FL
parameter compression, namely Federated QSGD, FedPAQ \citep{FedPAQ}, GZip
with fixed-point quantization (FxPQ + GZip), UVeQFed \citep{uveqfed} and
FP8. Federated QSGD (see \cref{sec:qsgd}) is our most important baseline
because it outperforms the other algorithms. FedPAQ only applies fixed-point
quantization, which is equivalent to Federated QSGD without lossless
compression. Similarly, FxPQ + GZip is equivalent to Federated QSGD with
Gzip for its lossless compression stages. UVeQFed generalizes scalar
quantization to vector quantization, followed by arithmetic coding. We apply
UVeQFed with the optimal hyperparameters reported by its authors. FP8
\citep{FP8} is a floating-point quantizer that uses an 8-bit floating-point
format designed for storing neural network gradients. We also evaluate all
experiments without compression to establish an accuracy benchmark.

\textbf{Hyperparameters} With the exception of \dataset{CelebA}, all our datasets
and models are also used by \citeauthor{FedProx}. We therefore adopt most of
the hyperparameters from \citeauthor{FedProx} and use LEAF's hyperparameters for \dataset{CelebA} \cite{LEAF}.
For all experiments, we sample 10 clients each round. We train
\dataset{Synthetic}, \dataset{FEMNIST} and \dataset{CelebA} for 500 rounds
each. We train \dataset{Sent140} for 1000 rounds due to slow convergence and
\dataset{Shakespeare} for 50 rounds due to rapid convergence. We use batch
size 10, learning rates 0.01, 0.003, 0.3, 0.8, 0.1 and $\mu$s (FedProx's
proximal term coefficient) 1, 1, 1, 0.001, 0 for \dataset{Synthetic},
\dataset{FEMNIST}, \dataset{Sent140}, \dataset{Shakespeare},
\dataset{CelebA} respectively. We randomly split the local datasets into
80\% training set and 20\% test set.

To select the quantization level $q$ for static quantization with Federated
QSGD, FedPAQ and FxPQ + GZip, we run a gridsearch over $q = 1, 2, 4, 8,
\ldots$ and choose for each dataset the lowest $q$ for which Federated QSGD
exceeds uncompressed training in accuracy. We set UVeQFed's ``coding rate''
hyperparameter $R=4$, which is the lowest value for which UVeQFed achieves
negligible accuracy differences compared to uncompressed training.
We set
the remaining hyperparameters of UVeQFed to the optimal values reported by
its authors. \Cref{sec:uveqfed} shows further experiments that compare against
UVeQFed with $R$ chosen to maximize its compression factor.

For DAdaQuant's time-adaptive quantization, we set $\psi$ to 0.9, $\phi$ to
${1/10}^{th}$ of the number of rounds and $q_\textrm{max}$ to the
quantization level $q$ for each experiment. For \dataset{Synthetic} and
\dataset{FEMNIST}, we set $q_\textrm{min}$ to 1. We find that
\dataset{Sent140}, \dataset{Shakespeare} and \dataset{CelebA} require a high
quantization level to achieve top accuracies and/or converge in few rounds.
This prevents time-adaptive quantization from increasing the quantization
level quickly enough, resulting in prolonged low-precision training that
hurts model performance. To counter this effect, we set $q_\textrm{min}$ to
$q_\textrm{max}/2$. This effectively results in binary time-adaptive
quantization with an initial low-precision phase with $q =
q_\textrm{max}/2$, followed by a high-precision phase with $q =
q_\textrm{max}$.

\subsection{Results}
\label{sec:experimentsresults}

\begin{wrapfigure}[18]{r}{0.40\textwidth}
  \vspace*{-1.7cm}  
  \begin{center}
    \scalebox{0.75}{
      \input{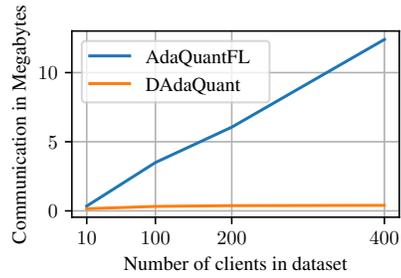}}
  \end{center}
  \caption{Comparison of AdaQuantFL and DAdaQuant. We plot the total
  client$\rightarrow$server communication required to train an MLR model on
  synthetic datasets with 10, 100, 200 and 400 clients. AdaQuantFL's
  communication increases linearly with the number of clients because it
  trains the model on all clients at each round. In contrast, DAdaQuant's
  communication does not change with the number of clients.} 
  \label{fig:adaquantfl} 
\end{wrapfigure}

We repeat the main experiments three times and report average results and
their standard deviation (where applicable). \Cref{tab:results} shows the
highest accuracy and total communication for each experiment.
\Cref{fig:pareto} plots the maximum accuracy achieved for any given amount
of communication.

\begin{table}[]
  \scriptsize
  \centering
  \begin{tabular}{ll@{\hskip -0mm}ll@{\hskip 2mm}ll@{\hskip 2mm}l}
    \multicolumn{1}{l|}{} & \multicolumn{2}{c|}{\textbf{Synthetic}} & \multicolumn{2}{c|}{\textbf{FEMNIST}} & \multicolumn{2}{c}{\textbf{Sent140}} \\ \hline
    \rowcolor[HTML]{EFEFEF} 
    \multicolumn{1}{l|}{\cellcolor[HTML]{EFEFEF}\textbf{Uncompressed}} & {$78.3\pm 0.3$} & \multicolumn{1}{l|}{\cellcolor[HTML]{EFEFEF}{$12.2$\,MB}} & {$77.7\pm 0.4$} & \multicolumn{1}{l|}{\cellcolor[HTML]{EFEFEF}{$\!\!\!\!\!132.1$\,GB}} & {$69.7\pm 0.5$} & {$43.9$\,GB} \\
    \multicolumn{1}{l|}{\textbf{Federated QSGD}} & {$-0.1\pm 0.1$} & \multicolumn{1}{l|}{{$17\times$}} & {$+0.7\pm 0.5$} & \multicolumn{1}{l|}{{$\!\!\!\!\!2809\times$}} & {$-0.0\pm 0.5$} & {$90\times$} \\
    \rowcolor[HTML]{EFEFEF} 
    \multicolumn{1}{l|}{\cellcolor[HTML]{EFEFEF}\textbf{FP8}} & {$\bm{+0.1\pm 0.4}$} & \multicolumn{1}{l|}{\cellcolor[HTML]{EFEFEF}{$4.0\times$ ($0.23\!\times\!\!\!\!\!\times$)}} & {$-0.1\pm 0.4$} & \multicolumn{1}{l|}{\cellcolor[HTML]{EFEFEF}{$\!\!\!\!\!4.0\times$ ($0.00\!\times\!\!\!\!\!\times$)}} & {$-0.2\pm 0.5$} & {$4.0\times$ ($0.04\!\times\!\!\!\!\!\times$)} \\
    \multicolumn{1}{l|}{\textbf{FedPAQ (FxPQ)}} & {$-0.1\pm 0.1$} & \multicolumn{1}{l|}{{$6.4\times$ ($0.37\!\times\!\!\!\!\!\times$)}} & {$+0.7\pm 0.5$} & \multicolumn{1}{l|}{{$\!\!\!\!\!11\times$ ($0.00\!\times\!\!\!\!\!\times$)}} & {$-0.0\pm 0.5$} & {$4.0\times$ ($0.04\!\times\!\!\!\!\!\times$)} \\
    \rowcolor[HTML]{EFEFEF} 
    \multicolumn{1}{l|}{\cellcolor[HTML]{EFEFEF}{\color[HTML]{333333} \textbf{FxPQ + GZip}}} & {\color[HTML]{333333} {$-0.1\pm 0.1$}} & \multicolumn{1}{l|}{\cellcolor[HTML]{EFEFEF}{\color[HTML]{333333} {$14\times$ ($0.82\!\times\!\!\!\!\!\times$)}}} & {\color[HTML]{333333} {$+0.6\pm 0.2$}} & \multicolumn{1}{l|}{\cellcolor[HTML]{EFEFEF}{\color[HTML]{333333} {$\!\!\!\!\!1557\times$ ($0.55\!\times\!\!\!\!\!\times$)}}} & {\color[HTML]{333333} {$-0.0\pm 0.6$}} & {\color[HTML]{333333} {$71\times$ ($0.79\!\times\!\!\!\!\!\times$)}} \\
    \multicolumn{1}{l|}{\textbf{UVeQFed}} & {$-0.5\pm 0.2$} & \multicolumn{1}{l|}{{$0.6\times$ ($0.03\!\times\!\!\!\!\!\times$)}} & {$-2.8\pm 0.5$} & \multicolumn{1}{l|}{{$\!\!\!\!\!12\times$ ($0.00\!\times\!\!\!\!\!\times$)}} & {$+0.0\pm 0.2$} & {$15\times$ ($0.16\!\times\!\!\!\!\!\times$)} \\ \hline
    \rowcolor[HTML]{EFEFEF} 
    \multicolumn{1}{l|}{\cellcolor[HTML]{EFEFEF}\textbf{DAdaQuant}} & {$-0.2\pm 0.4$} & \multicolumn{1}{l|}{\cellcolor[HTML]{EFEFEF}{$\bm{48\times}$ ($\bm{2.81\!\times\!\!\!\!\!\times}$)}} & {$+0.7\pm 0.1$} & \multicolumn{1}{l|}{\cellcolor[HTML]{EFEFEF}{$\!\!\!\!\!\bm{4772\times}$ ($\bm{1.70\!\times\!\!\!\!\!\times$})}} & {$-0.1\pm 0.4$} & {$\bm{108\times}$ ($\bm{1.19\!\times\!\!\!\!\!\times}$)} \\
    \multicolumn{1}{l|}{\textbf{DAdaQuant$_{\text{time}}$}} & {$-0.1\pm 0.5$} & \multicolumn{1}{l|}{{$37\times$ ($2.16\!\times\!\!\!\!\!\times$)}} & {$\bm{+0.8\pm 0.2}$} & \multicolumn{1}{l|}{{$\!\!\!\!\!4518\times$ ($1.61\!\times\!\!\!\!\!\times$)}} & {$-0.1\pm 0.6$} & {$93\times$ ($1.03\!\times\!\!\!\!\!\times$)} \\
    \rowcolor[HTML]{EFEFEF} 
    \multicolumn{1}{l|}{\cellcolor[HTML]{EFEFEF}\textbf{DAdaQuant$_{\text{clients}}$}} & {$+0.0\pm 0.3$} & \multicolumn{1}{l|}{\cellcolor[HTML]{EFEFEF}{$26\times$ ($1.51\!\times\!\!\!\!\!\times$)}} & {$+0.7\pm 0.4$} & \multicolumn{1}{l|}{\cellcolor[HTML]{EFEFEF}{$\!\!\!\!\!3017\times$ ($1.07\!\times\!\!\!\!\!\times$)}} & {$\bm{+0.1\pm 0.6}$} & {$105\times$ ($1.16\!\times\!\!\!\!\!\times$)} \\
     &  &  &  &  &  &  \\
    \multicolumn{1}{l|}{} & \multicolumn{2}{c|}{\textbf{Shakespeare}} & \multicolumn{2}{c}{\textbf{Celeba}} &  &  \\ \hhline{-----}
  
    \multicolumn{1}{l|}{\cellcolor[HTML]{EFEFEF}\textbf{Uncompressed}} & \cellcolor[HTML]{EFEFEF}{$\bm{49.9\pm 0.3}$} & \multicolumn{1}{l|}{\cellcolor[HTML]{EFEFEF}{$267.0$\,MB}} & \cellcolor[HTML]{EFEFEF}{$90.4\pm 0.0$} & \cellcolor[HTML]{EFEFEF}{$12.6$\,GB} &  &  \\
    \multicolumn{1}{l|}{\textbf{Federated QSGD}} & {$-0.5\pm 0.6$} & \multicolumn{1}{l|}{{$9.5\times$}} & {$-0.1\pm 0.1$} & {$648\times$} &  &  \\
    \multicolumn{1}{l|}{\cellcolor[HTML]{EFEFEF}\textbf{FP8}} & \cellcolor[HTML]{EFEFEF}{$-0.2\pm 0.4$} & \multicolumn{1}{l|}{\cellcolor[HTML]{EFEFEF}{$4.0\times$ ($0.42\!\times\!\!\!\!\!\times$)}} & \cellcolor[HTML]{EFEFEF}{$\bm{+0.0\pm 0.1}$} & \cellcolor[HTML]{EFEFEF}{$4.0\times$ ($0.01\!\times\!\!\!\!\!\times$)} &  &  \\
    \multicolumn{1}{l|}{\textbf{FedPAQ (FxPQ)}} & {$-0.5\pm 0.6$} & \multicolumn{1}{l|}{{$3.2\times$ ($0.34\!\times\!\!\!\!\!\times$)}} & {$-0.1\pm 0.1$} & {$6.4\times$ ($0.01\!\times\!\!\!\!\!\times$)} &  &  \\
    \multicolumn{1}{l|}{\cellcolor[HTML]{EFEFEF}\textbf{FxPQ + GZip}} & \cellcolor[HTML]{EFEFEF}{$-0.5\pm 0.6$} & \multicolumn{1}{l|}{\cellcolor[HTML]{EFEFEF}{$9.3\times$ ($0.97\!\times\!\!\!\!\!\times$)}} & \cellcolor[HTML]{EFEFEF}{$-0.1\pm 0.2$} & \cellcolor[HTML]{EFEFEF}{$494\times$ ($0.76\!\times\!\!\!\!\!\times$)} &  &  \\
    \multicolumn{1}{l|}{\textbf{UVeQFed}} & {$-0.0\pm 0.4$} & \multicolumn{1}{l|}{{$7.9\times$ ($0.83\!\times\!\!\!\!\!\times$)}} & {$-0.4\pm 0.3$} & {$31\times$ ($0.05\!\times\!\!\!\!\!\times$)} &  &  \\ \hhline{-----}
    \multicolumn{1}{l|}{\cellcolor[HTML]{EFEFEF}\textbf{DAdaQuant}} & \cellcolor[HTML]{EFEFEF}{$-0.6\pm 0.5$} & \multicolumn{1}{l|}{\cellcolor[HTML]{EFEFEF}{$\bm{21\times}$ ($\bm{2.21\!\times\!\!\!\!\!\times}$)}} & \cellcolor[HTML]{EFEFEF}{$-0.1\pm 0.1$} & \cellcolor[HTML]{EFEFEF}{$\bm{775\times}$ ($\bm{1.20\!\times\!\!\!\!\!\times}$)} &  &  \\
    \multicolumn{1}{l|}{\textbf{DAdaQuant$_{\text{time}}$}} & {$-0.5\pm 0.5$} & \multicolumn{1}{l|}{{$12\times$ ($1.29\!\times\!\!\!\!\!\times$)}} & {$-0.1\pm 0.2$} & {$716\times$ ($1.10\!\times\!\!\!\!\!\times$)} &  &  \\
    \multicolumn{1}{l|}{\cellcolor[HTML]{EFEFEF}\textbf{DAdaQuant$_{\text{clients}}$}} & \cellcolor[HTML]{EFEFEF}{$-0.4\pm 0.5$} & \multicolumn{1}{l|}{\cellcolor[HTML]{EFEFEF}{$16\times$ ($1.67\!\times\!\!\!\!\!\times$)}} & \cellcolor[HTML]{EFEFEF}{$-0.1\pm 0.0$} & \cellcolor[HTML]{EFEFEF}{$700\times$ ($1.08\!\times\!\!\!\!\!\times$)} &  & 
    \end{tabular}
  \caption{Top-1 test accuracies and total client$\rightarrow$server communication of all baselines, DAdaQuant,
  DAdaQuant$_\textrm{time}$ and DAdaQuant$_\textrm{clients}$. Entry $x \pm y\,\,\,\, p\!\times (q\!\times\!\!\!\!\times)$ denotes an accuracy difference of x\% w.r.t. the uncompressed
  accuracy with a standard deviation of y\%, a compression factor of $p$ w.r.t. the uncompressed
  communication and a compression factor of $q$ w.r.t. Federated QSGD.
  }
  \label{tab:results}
\end{table}

\paragraph{Baselines}

\Cref{tab:results} shows that the accuracy of most experiments lies within
the margin of error of the uncompressed experiments. This reiterates the
viability of quantization-based compression algorithms for communication
reduction in FL.
For all experiments, Federated QSGD achieves a significantly higher
compression factor than the other baselines. The authors of FedPAQ and
UVeQFed also compare their methods against QSGD and report them as superior.
However, FedPAQ is compared against ``unfederated'' QSGD that communicates
gradients after each local training step and UVeQFed is  
compared against QSGD without its lossless compression stages.

\paragraph{Time-adaptive quantization} The purely time-adaptive version of
DAdaQuant, DAdaQuant$_\textrm{time}$, universally outperforms Federated QSGD
and the other baselines in \Cref{tab:results}, achieving comparable accuracies
while lowering communication costs. DAdaQuant$_\textrm{time}$
performs particularly well on \dataset{Synthetic} and \dataset{FEMNIST},
where it starts from the lowest possible quantization level $q=1$. However,
binary time-adaptive quantization still measurably improves  over QSGD for
\dataset{Sent140}, \dataset{Shakespeare} and \dataset{Celeba}.

\Cref{fig:adaquantfl} provides empirical evidence that AdaQuantFL's communication scales linearly
with the number of clients. As a result, AdaQuantFL is prohibitively expensive for datasets with
thousands of clients such as \dataset{Celeba} and \dataset{Sent140}. DAdaQuant does not face this
problem because its communication is unaffected by the number of clients. 

\begin{figure}
  \vspace*{-0.7cm}
  \begin{center} 
    \hspace*{-0.5cm}
    \scalebox{0.75}{
      \input{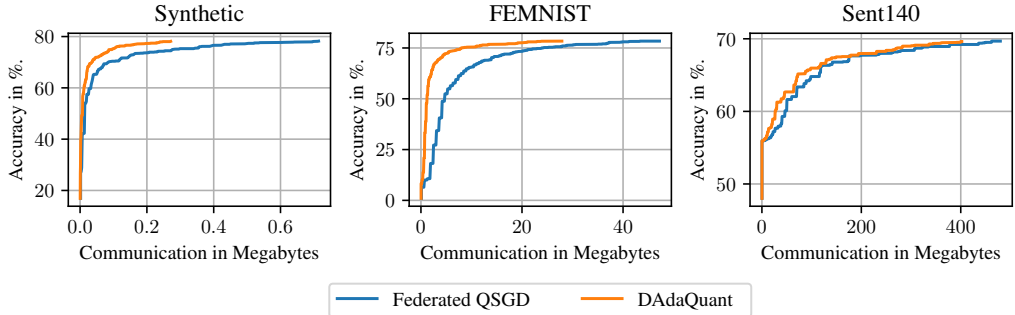}}
  \end{center}
  \vspace*{-0.3cm} 
  \caption{Communication-accuracy trade-off curves for training on
  \dataset{FEMNIST} with Federated QSGD and DAdaQuant. We plot the average highest
  accuracies achieved up to any given amount of client$\rightarrow$server communication.
  \Cref{sec:paretofull} shows curves for all datasets, with similar results.}
  \label{fig:pareto}
\end{figure}

\paragraph{Client-adaptive quantization}
The purely time-adaptive version of DAdaQuant, DAdaQuant$_\textrm{clients}$,
also universally outperforms Federated QSGD and the other baselines in
\Cref{tab:results}, achieving similar accuracies while lowering
communication costs. Unsurprisingly, the performance of
DAdaQuant$_\textrm{clients}$ is correlated with the coefficient of variation
$c_v = \frac{\sigma}{\mu}$ of the numbers of samples in the local datasets
with mean $\mu$ and standard deviation $\sigma$: \dataset{Synthetic}
($c_v=3.3$) and \dataset{Shakespeare} ($c_v=1.7$) achieve significantly
higher compression factors than \dataset{Sent140} ($c_v=0.3$),
\dataset{FEMNIST} ($c_v=0.4$) and \dataset{Celeba} ($c_v=0.3$).

\paragraph{DAdaQuant}
DAdaQuant outperforms DAdaQuant$_\textrm{time}$ and
DAdaQuant$_\textrm{clients}$ in communication while achieving similar
accuracies. The compression factors of DAdaQuant are roughly multiplicative
in those of DAdaQuant$_\textrm{clients}$ and DAdaQuant$_\textrm{time}$. This
demonstrates that we can effectively combine time- and client-adaptive
quantization for maximal communication savings.


\paragraph{Pareto optimality} \Cref{fig:pareto} shows that DAdaQuant
achieves a higher accuracy than the strongest baseline, Federated QSGD, for
any fixed accuracy. This means that DAdaQuant is Pareto optimal for the
datasets we have explored.
  
\section{Conclusion}

We introduced DAdaQuant as a computationally efficient and robust algorithm
to boost the performance of quantization-based FL compression algorithms. We
showed intuitively and mathematically how DAdaQuant's dynamic adjustment of
the quantization level across time and clients minimize
client$\rightarrow$server communication while maintaining convergence speed.
Our experiments establish DAdaQuant as nearly universally superior over
static quantizers, achieving state-of-the-art compression factors when
applied to Federated QSGD. The communication savings of DAdaQuant
effectively lower FL bandwidth usage, energy consumption and training time.
Future work may apply and adapt DAdaQuant to new quantizers, further pushing
the state of the art in FL uplink compression.
  
\section{Reproducibility Statement}

Our submission includes a repository with the source code for DAdaQuant and
for the experiments presented in this paper. All the datasets used in our
experiments are publicly available. Any post-processing steps of the
datasets are described in \Cref{sec:models_datasets_detailed}. To facilitate
the reproduction of our results, we have bundled all our source code,
dependencies and datasets into a Docker image. The repository submitted with
this paper contains instructions on how to use this Docker image and
reproduce all plots and tables in this paper.

\section{Ethics Statement}

FL trains models on private client datasets in a privacy-preserving manner.
However, FL does not completely eliminate privacy concerns, because the
transmitted model updates and the learned model parameters may expose the
private client data from which they are derived. Our work does not directly
target privacy concerns in FL. With that said, it is worth noting that
DAdaQuant does not expose any client data that is not already exposed
through standard FL training algorithms. In fact, DAdaQuant reduces the
amount of exposed data through lossy compression of the model
updates. We therefore believe that DAdaQuant is free of ethical
complications. 

\bibliography{paper} 
\bibliographystyle{iclr2022_conference}

\newpage
\clearpage
\appendix

\section{Additional simulation details and experiments}

\subsection{Additional simulation details}
\label{sec:models_datasets_detailed}

Here, we give detailed information on the models, datasets, training
objectives and implementation that we use for our experiments. We set the
five following FL tasks:

\begin{itemize}[noitemsep,leftmargin=*,topsep=0pt]
  \item Multinomial logistic regression (MLR) on a synthetic dataset called
  \dataset{Synthetic} that contains vectors in $\R^{60}$ with a label of one
  out of 10 classes.  We use the synthetic dataset generator in \citet{FedProx} to
  generate synthetic datasets. The generator samples \dataset{Synthetic}'s
  local datasets and labels from MLR models with randomly initialized
  parameters. For this purpose, parameters $\alpha$ and $\beta$ control
  different kinds of data heterogeneity. $\alpha$ controls the variation in
  the local models from which the local dataset labels are generated.
  $\beta$ controls the variation in the local dataset samples. We set
  $\alpha = 1$ and $\beta = 1$ to simulate an FL setting with both kinds of
  data heterogeneity. This makes \dataset{Synthetic} a useful testbed for
  FL.
  \item Character classification into 62 classes of handwritten characters
  from the \dataset{FEMNIST} dataset using a CNN. \dataset{FEMNIST} groups
  samples from the same author into the same local dataset.
  \item Smile detection in facial images from the \dataset{CelebA} dataset
  using a CNN. \dataset{CelebA} groups samples of the same person into the
  same local dataset. We note that LEAF's CNN for CelebA uses BatchNorm
  layers. We replace them with LayerNorm layers because they are more
  amenable to quantization. This change does not affect the final accuracy.
  \item Binary sentiment analysis of tweets from the \dataset{Sent140}
  dataset using an LSTM. \dataset{Sent140} groups tweets from the same user
  into the same local dataset. The majority of local datasets in the raw
  \dataset{Sent140} dataset only have a single sample. This impedes FL
  convergence. Therefore, we filter \dataset{Sent140} to clients with at
  least 10 samples (i.e. one complete batch). \Citet{LEAF, FedProx}
  similarly filter \dataset{Sent140} for their FL experiments.
  \item Next character prediction on text snippets from the
  \dataset{Shakespeare} dataset of Shakespeare's collected plays using an
  LSTM. \dataset{Shakespeare} groups lines from the same character into the
  same local dataset.
\end{itemize}
\Cref{tab:modelsanddatasets} provides statistics of our models and datasets. 
  
For our experiments in \Cref{fig:adaquantfl}, AdaQuantFL requires a
hyperparameter $s$ that determines the initial quantization level. We set
$s$ to 2, the optimal value reported by the authors of AdaQuantFL. The remaining
hyperparameters are identical to those used for the \dataset{Synthetic}
dataset experiments in \Cref{tab:results}.

We implement the models with PyTorch \citep{pytorch} and use Flower
\citep{Flower} to simulate the FL server and clients.

\begin{table}[h]
  \centering
  \small
  \begin{tabular}{lllllllll}
      \textbf{Dataset} & \textbf{Model} & \textbf{Parameters} & \textbf{Clients} & \textbf{Samples} & \multicolumn{4}{c}{\textbf{Samples per client}}  \\ \cline{6-9} 
      & & &                  &                  & \textbf{mean}         & \textbf{min} & \textbf{max} & \textbf{stddev}      \\ \hline
      Synthetic  & MLR & 610 &  30  & 9,600              &  320.0            & 45      & 5,953  &   1051.6                   \\
      FEMNIST          & 2-layer CNN  & $6.6\times10^6$  & 3,500 &  785,582           &  224.1          & 19    &  584 & 87.8                            \\
      CelebA          & 4-layer CNN  & $6.3\times10^5$  & 9,343 &  200,288           &  21.4          & 5    &  35 & 7.6                            \\
      Sent140          & 2-layer LSTM & $1.1\times10^6$ & 21,876 &    430,707           &   51.1         & 10   & 549 & 17.1         \\                   
      Shakespeare          & 2-layer LSTM & $1.3\times10^5$ & 1,129 &    4,226,158           &   3743         & 3   & 66,903 & 6212                            
      \end{tabular}
      \caption{Statistics of the models and datasets used for evaluation. MLR stands for ``Multinomial Logistic Regression''.} 
      \label{tab:modelsanddatasets}
\vspace*{-1.5em}  
\end{table}
  
\subsection{Complete communication-accuracy trade-off curves}
\label{sec:paretofull}

\begin{figure}[H]
  \begin{center}
    \hspace*{-0.5cm}
    \scalebox{0.75}{
      \input{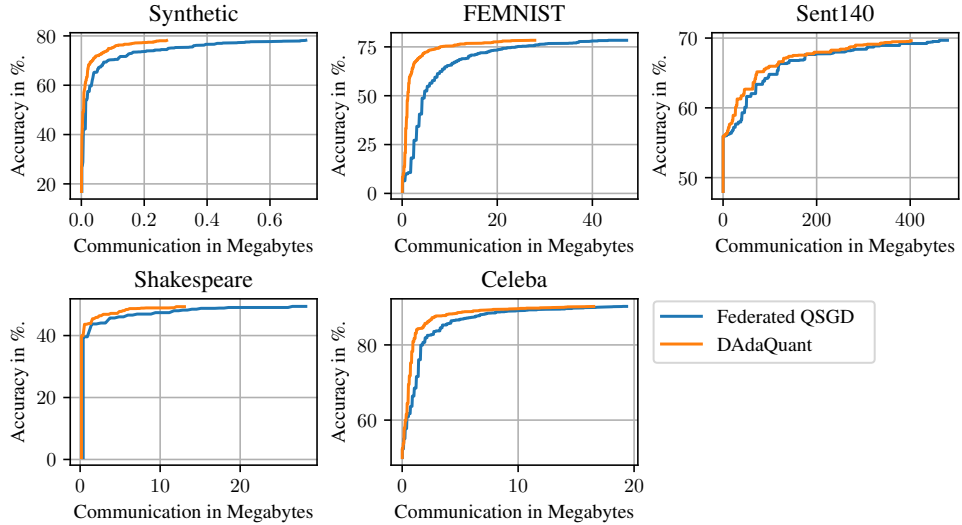}}
  \end{center}
  \vspace*{-0.3cm}
  \caption{Communication-accuracy trade-off curves for QSGD and DAdaQuant. We plot the average highest accuracies achieved up to any given amount of communication.}
\end{figure}

\subsection{Additional UVeQFed experiments}
\label{sec:uveqfed}

To demonstrate that the choice of UVeQFed's ``coding rate'' hyperparameter $R$
does not affect our findings on the superior compression factors of
DAdaQuant, we re-evaluate UVeQFed with $R=1$, which maximizes UVeQFed's compression factor.
Our results in \Cref{tab:uveqfed} show that with the exception of
\dataset{Shakespeare}, DAdaQuant still achieves considerably higher
compression factors than UVeQFed.

\begin{table}[H]
  \scriptsize
  \centering
  \begin{tabular}{l|l|l|l|l|l}
   & \multicolumn{1}{c|}{\textbf{Synthetic}} & \multicolumn{1}{c|}{\textbf{FEMNIST}} & \multicolumn{1}{c|}{\textbf{Sent140}} & \multicolumn{1}{c|}{\textbf{Shakespeare}} & \multicolumn{1}{c}{\textbf{Celeba}} \\ \hline
  \rowcolor[HTML]{EFEFEF} 
  \textbf{Uncompressed} & {$12.2$\,MB} & {$132.1$\,GB} & {$43.9$\,GB} & \cellcolor[HTML]{EFEFEF}{$267.0$\,MB} & \cellcolor[HTML]{EFEFEF}{$12.6$\,GB} \\
  \textbf{QSGD} & {$17\times$} & {$2809\times$} & {$90\times$} & {$9.5\times$} & {$648\times$} \\
  \rowcolor[HTML]{EFEFEF} 
  \textbf{UVeQFed (R=4)} & {$0.6\times$ ($0.03\!\times\!\!\!\!\!\times$)} & {$12\times$ ($0.00\!\times\!\!\!\!\!\times$)} & {$15\times$ ($0.16\!\times\!\!\!\!\!\times$)} & {$7.9\times$ ($0.83\!\times\!\!\!\!\!\times$)} & {$31\times$ ($0.05\!\times\!\!\!\!\!\times$)} \\
  \textbf{UVeQFed (R=1)} & {$13\times$ ($0.77\!\times\!\!\!\!\!\times$)} & {$34\times$ ($0.01\!\times\!\!\!\!\!\times$)} & {$41\times$ ($0.45\!\times\!\!\!\!\!\times$)} & {$\bf{21\times}$ ($\bf{2.22\!\times\!\!\!\!\!\times}$)} & {$93\times$ ($0.14\!\times\!\!\!\!\!\times$)} \\
  \rowcolor[HTML]{EFEFEF} 
  \textbf{DAdaQuant} & {$\bf{48\times}$ ($\bf{2.81\!\times\!\!\!\!\!\times}$)} & {$\bf{4772\times}$ ($\bf{1.70\!\times\!\!\!\!\!\times}$)} & {$\bf{108\times}$ ($\bf{1.19\!\times\!\!\!\!\!\times}$)} & \cellcolor[HTML]{EFEFEF}{$21\times$ ($2.21\!\times\!\!\!\!\!\times$)} & \cellcolor[HTML]{EFEFEF}{$\bf{775\times}$ ($\bf{1.20\!\times\!\!\!\!\!\times}$)}
  \end{tabular}
  \caption{Comparison of the compression factors of DAdaQuant, UVeQFed with
  $R=4$ (default value used for our experiments in \Cref{tab:results}) and
  UVeQFed with $R=1$ (maximizes UVeQFed's compression factor). Entry $p\!\times (q\!\times\!\!\!\!\!\times)$ denotes a compression factor of $p$ w.r.t. the uncompressed
  communication and a compression factor of $q$ w.r.t. Federated QSGD.}
  \label{tab:uveqfed}
  \end{table}

\section{Proofs}

\label{sec:proofs}

\begin{lemma}{For arbitrary $t>0$ and parameter $p_i \in [-t, t]$, let $s_i = \frac{t}{q_i}$, $b_i = \mathrm{rem}\brace{p_i, s_i}$ and $u_i = s_i - b_i$. Then, $\mathrm{Var}\brace{\quantizer_{q_i}(p_i)} = u_ib_i$}. 
\label{lemma:variance}
\end{lemma}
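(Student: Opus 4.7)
The plan is to exploit the fact that DAdaQuant's Federated QSGD quantizer is a stochastic, unbiased fixed‑point quantizer that stochastically rounds each value to one of the two endpoints of its encompassing bin. First, I would set up the geometry: since $b_i = \mathrm{rem}(p_i, s_i)$ and $s_i = t/q_i$, the parameter $p_i$ lies in an interval $[k s_i,(k+1)s_i]$ for some integer $k$, with $p_i = k s_i + b_i$ and $u_i = s_i - b_i$ equal to the distance from $p_i$ to the upper endpoint. (The case $p_i<0$ is handled by symmetry, because $\quantizer_{q_i}$ stores the sign separately and quantizes $|p_i|$.)

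Second, I would invoke unbiasedness to identify the rounding probabilities. Since $\quantizer_{q_i}(p_i)\in\{k s_i,(k+1)s_i\}$ and $\E[\quantizer_{q_i}(p_i)]=p_i$, writing $\Pr[\quantizer_{q_i}(p_i)=(k+1)s_i]=\pi$ forces
$$k s_i + \pi s_i = p_i = k s_i + b_i,$$
so $\pi = b_i/s_i$ and correspondingly $\Pr[\quantizer_{q_i}(p_i)=k s_i]=u_i/s_i$.

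Third, I would compute the variance directly from the two‑point distribution. Because $\quantizer_{q_i}(p_i)$ is a Bernoulli random variable taking two values separated by $s_i$ with probabilities $u_i/s_i$ and $b_i/s_i$, the standard scaled‑Bernoulli variance formula gives
$$\mathrm{Var}\bigl(\quantizer_{q_i}(p_i)\bigr) = \frac{u_i}{s_i}\cdot\frac{b_i}{s_i}\cdot s_i^2 = u_i b_i,$$
as claimed.

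There is no real obstacle here: the only substantive step is justifying the rounding probabilities from unbiasedness, which is a one‑line linear solve, after which the result is a direct application of the Bernoulli variance formula. The main thing to be careful about is making the sign convention explicit so that the identification $p_i = k s_i + b_i$ is valid; this is why I would begin by reducing to the nonnegative case using the sign‑separation property of QSGD.
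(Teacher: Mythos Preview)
Your proposal is correct and follows essentially the same approach as the paper: both use unbiasedness to pin down the two rounding probabilities as $b_i/s_i$ and $u_i/s_i$, then compute the variance of the resulting two-point distribution. The only cosmetic difference is that the paper expands $\E[(\quantizer_{q_i}(p_i)-p_i)^2]=\tfrac{b_i}{s_i}u_i^2+\tfrac{u_i}{s_i}b_i^2$ directly rather than invoking the scaled-Bernoulli formula.
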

\begin{proof}
\begin{align*}
&\mathrm{Var}\brace{\quantizer_{q_i}(p_i)} & & \\
=~ &\mathrm{E}\bracket{\brace{\quantizer_{q_i}(p_i) - \mathrm{E}\bracket{\quantizer_{q_i}(p_i)}}^2} & & \\
=~ &\mathrm{E}\bracket{\brace{\quantizer_{q_i}(p_i) - p_i}^2} & & \quantizer_{q_i}(p_i) \text{ is an unbiased estimator of } p_i \\
=~ &\frac{b_i}{s_i}u_i^2 + \frac{u_i}{s_i}b_i^2 & & \textrm{cf. \cref{fig:stochastic_rounding}} \\
=~ &\frac{u_ib_i}{s_i}\brace{u_i+b_i} \\
=~ &u_ib_i
\qedhere
\end{align*}
\end{proof}

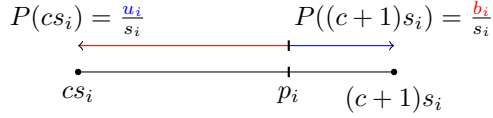
\begin{figure}[h!]
  \centering
  \begin{tikzpicture}[scale=0.7]
      \draw [-, color=black] (0,0) -- (4,0) -- (6, 0);
      \draw [thick] (4,-.1) node[below]{$p_i$} -- (4,0.1);
      \draw [fill] (0,0) circle [radius=.05];
      \node at (0,-.1) [below] {$cs_i$};
      \draw [fill] (6,0) circle [radius=.05];
      \node at (6,-.1) [below] {$(c+1)s_i$};
      \begin{scope}[shift={(0,-.5)}]
      \draw [{<[color=black]}-, color=red] (0,1) node[above,color=black]{$P(cs_i) = \frac{\color{blue}u_i}{\color{black}s_i}$} -- (4,1);
      \draw [-{>[color=black]}, color=blue] (4,1) -- (6,1) node[above,color=black]{$P((c+1)s_i) = \frac{\color{red}b_i}{\color{black}s_i}$};
      \draw [thick] (4,0.9) -- (4,1.1);
      \end{scope}
  \end{tikzpicture}
  \caption{Illustration of the Bernoulli random variable $\quantizer_{q_i}(p_i)$ in Lemma \ref{lemma:variance}. $s_i$ is the
  length of the quantization interval. $p_i$ is rounded up to $(c+1)s_i$ with a probability proportional
  to its distance from $cs_i$.}
  \label{fig:stochastic_rounding}
\end{figure}  

\begin{lemma}{Let $\quantizer$ be a fixed-point quantizer. Assume that
  parameters $p_1\ldots p_K$ are sampled from $\mathcal{U}[-t, t]$ for arbitrary $t >
  0$. Then, $\E_{p_1\ldots p_K}[\Var(e^{q_1\ldots q_K}_p)] = \frac{t^2}{6}\sum_{i=1}^{K} \frac{w_i^2}{q_i^2}$}. 
  \label{lemma:varianceq}
\end{lemma}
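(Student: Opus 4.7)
The strategy is to (i) reduce the variance of the aggregated error to a sum of per-client variances by independence, (ii) apply Lemma~\ref{lemma:variance} to rewrite each per-client variance, and (iii) take the expectation over $p_i \sim \mathcal{U}[-t,t]$ using the fact that the fractional part of a uniform variable on a length-$2t$ interval is uniform on $[0,s_i]$ whenever $s_i=t/q_i$ evenly divides that interval.

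\textbf{Step 1 (linearity and independence).} Writing $p = \sum_{i=1}^K w_i p_i$ and $\quantizer_{q_1\ldots q_K}(p) = \sum_{i=1}^K w_i \quantizer_{q_i}(p_i)$, the signed error is
$$
p - \quantizer_{q_1\ldots q_K}(p) = \sum_{i=1}^K w_i\bigl(p_i - \quantizer_{q_i}(p_i)\bigr).
$$
Since $\quantizer_{q_i}$ is unbiased (as used in Lemma~\ref{lemma:variance}) and the $K$ clients quantize independently, conditional on $(p_1,\ldots,p_K)$ the $K$ summands are independent and mean-zero. Matching the paper's convention (the text immediately preceding Theorem~\ref{theorem:q} identifies $\E[\Var(\quantizer_q(p))]$ with $\E[\Var(e^q_p)]$ for an unbiased fixed-point quantizer), I will therefore work with the variance of the signed error, which gives
$$
\Var\bigl(e^{q_1\ldots q_K}_p\bigr) = \sum_{i=1}^K w_i^2\,\Var\bigl(\quantizer_{q_i}(p_i)\bigr).
$$

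\textbf{Step 2 (apply Lemma~\ref{lemma:variance}).} Fix $i$ and set $s_i=t/q_i$, $b_i=\mathrm{rem}(p_i,s_i)$, $u_i=s_i-b_i$. Lemma~\ref{lemma:variance} yields $\Var(\quantizer_{q_i}(p_i)) = u_i b_i = b_i(s_i-b_i)$, so it remains to evaluate $\E_{p_i}[b_i(s_i-b_i)]$ under $p_i\sim\mathcal{U}[-t,t]$.

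\textbf{Step 3 (distribution of the residue and expectation).} The key observation is that the interval $[-t,t]$ is partitioned by the quantization grid into $2q_i$ subintervals of length exactly $s_i$, so the map $p_i\mapsto b_i=\mathrm{rem}(p_i,s_i)$ pushes the uniform law on $[-t,t]$ forward to the uniform law on $[0,s_i]$. Hence
$$
\E[b_i(s_i-b_i)] = s_i\,\E[b_i] - \E[b_i^2] = s_i\cdot\tfrac{s_i}{2} - \tfrac{s_i^2}{3} = \tfrac{s_i^2}{6} = \tfrac{t^2}{6q_i^2}.
$$
Substituting into Step~1 and using independence of the $p_i$'s to pull the expectation inside the sum gives the claimed identity $\E[\Var(e^{q_1\ldots q_K}_p)] = \tfrac{t^2}{6}\sum_{i=1}^K w_i^2/q_i^2$.

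\textbf{Anticipated obstacle.} The calculation itself is routine once the right framing is in place; the only delicate point is justifying that $\Var(e^{q_1\ldots q_K}_p)$ can be taken to mean the variance of the signed error (not of its absolute value). I will lean on the paper's own identification of $\E[\Var(\quantizer_q(p))]$ with $\E[\Var(e^q_p)]$ for unbiased stochastic fixed-point quantizers, and note that the conditional independence of per-client quantizations (given the $p_i$'s) is what allows the cross terms to vanish and gives the clean sum-of-squares formula.
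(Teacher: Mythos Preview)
Your proposal is correct and follows essentially the same approach as the paper: decompose the variance by independence into $\sum_i w_i^2\Var(\quantizer_{q_i}(p_i))$, apply Lemma~\ref{lemma:variance} to get $u_ib_i$, then average over the uniform law. The only cosmetic difference is that the paper carries out the averaging via explicit iterated integrals (first using sign symmetry to pass to $[0,t]$, then $s_i$-periodicity to pass to $[0,s_i]$, then computing $\int_0^{s_i}(s_i-p)p\,dp=s_i^3/6$), whereas you phrase the same computation probabilistically by noting that $b_i$ is uniform on $[0,s_i]$ and evaluating $\E[b_i(s_i-b_i)]$ from the first two moments; both routes arrive at $s_i^2/6=t^2/(6q_i^2)$ in the same way.
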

\begin{proof}
\begin{align*}
&\E_{p_1\ldots p_K}[\Var(e_p)] & & \\
=~ &\frac{1}{2t}\int_{-t}^t \frac{1}{2t}\int_{-t}^t \ldots \frac{1}{2t}\int_{-t}^t \mathrm{Var}\brace{\sum_{i=1}^{K}w_i \quantizer_{q_i}(p_i) - p}\,dp_1 dp_2 \ldots dp_K & & \\
=~ &\frac{1}{t}\int_0^t \frac{1}{t}\int_0^t \ldots \frac{1}{t}\int_0^t \mathrm{Var}\brace{\sum_{i=1}^{K}w_i \quantizer_{q_i}(p_i) - p}\,dp_1 dp_2 \ldots dp_K & & \textrm{\parbox[t]{3.5cm}{symmetry of $\quantizer_{q_i}(p_i)$ w.r.t. negation}}\\
=~ &\frac{1}{t^n}\int_0^t \int_0^t \ldots \int_0^t \sum_{i=1}^{K}w_i^2\mathrm{Var}\brace{\quantizer_{q_i}(p_i)}\,dp_1 dp_2 \ldots dp_K & & \text{\parbox[t]{3.5cm}{mutual independence of $ \quantizer_{q_i}(p_i)~\forall i$}} \\
=~ &\frac{1}{t^n}\sum_{i=1}^{K}\int_0^t \int_0^t \ldots \int_0^t w_i^2\mathrm{Var}\brace{\quantizer_{q_i}(p_i)}\,dp_1 dp_2 \ldots dp_K & & \text{\parbox[t]{3.5cm}{exchangeability of finite sums and integrals}} \\
=~ &\frac{1}{t^n}\sum_{i=1}^{K}t^{n-1}\int_0^t w_i^2\mathrm{Var}\brace{\quantizer_{q_i}(p_i)}\,dp_i & & \\
=~ &\frac{1}{t}\sum_{i=1}^{K} w_i^2\int_0^t\mathrm{Var}\brace{\quantizer_{q_i}(p_i)}\,dp_i & & \\
=~ &\frac{1}{t}\sum_{i=1}^{K} w_i^2\int_0^t u_i b_i\,dp_i & & \text{Lemma \ref{lemma:variance}} \\
=~ &\frac{1}{t}\sum_{i=1}^{K} w_i^2 q_i \int_0^{s_i} u_i b_i\,dp_i & & s_i\text{-periodicity of } u_i \text{ and } b_i \\
=~ &\frac{1}{t}\sum_{i=1}^{K} w_i^2 q_i \int_0^{s_i} \brace{s_i - p_i} p_i\,dp_i & & \\
=~ &\frac{1}{6t}\sum_{i=1}^{K} w_i^2 q_i s_i^3 & & \\
=~ &\frac{t^2}{6}\sum_{i=1}^{K} \frac{w_i^2}{q_i^2} & & 
\end{align*}
\end{proof}

\begin{lemma}{Let $\quantizer$ be a fixed-point quantizer. Assume that
parameters $p_1\ldots p_K$ are sampled from $\mathcal{U}[-t, t]$ for arbitrary $t >
0$. Then, $\min_{q_1\ldots q_K}\E_{p_1\ldots p_K}[\Var(e^{q_1\ldots q_K}_p)]$
subject to $Q = \sum_{i=1}^{K}q_i$ is minimized by $q_i =
Q\frac{w_i^{2/3}}{\sum_{k=1}^{K}{w_k^{2/3}}}$}. 
\label{lem:minimumq}
\end{lemma}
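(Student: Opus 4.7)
The plan is to reduce the problem to a standard constrained optimization by first invoking Lemma \ref{lemma:varianceq}, which provides the closed form
\[
\E_{p_1\ldots p_K}[\Var(e^{q_1\ldots q_K}_p)] = \frac{t^2}{6}\sum_{i=1}^{K} \frac{w_i^2}{q_i^2}.
\]
Since $t^2/6$ is a positive constant independent of the $q_i$, the problem is equivalent to minimizing $f(q_1,\ldots,q_K) = \sum_{i=1}^K w_i^2 / q_i^2$ subject to the linear constraint $g(q_1,\ldots,q_K) := \sum_i q_i - Q = 0$, over $q_i > 0$.

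Next I would apply the method of Lagrange multipliers. Forming $\mathcal{L} = \sum_i w_i^2/q_i^2 - \lambda\bigl(\sum_i q_i - Q\bigr)$ and setting $\partial \mathcal{L}/\partial q_i = 0$ gives the stationarity condition $-2w_i^2/q_i^3 = \lambda$ for each $i$. Solving for $q_i$ yields $q_i = (-2/\lambda)^{1/3} w_i^{2/3}$, that is, $q_i = C\, w_i^{2/3}$ for a common constant $C$ determined by the constraint. Substituting into $\sum_i q_i = Q$ gives $C = Q/\sum_k w_k^{2/3}$, so
\[
q_i = Q\,\frac{w_i^{2/3}}{\sum_{k=1}^K w_k^{2/3}},
\]
which is the claimed expression.

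To confirm this critical point is actually the global minimum, I would invoke convexity: on the open positive orthant each term $w_i^2/q_i^2$ is strictly convex (its second derivative $6w_i^2/q_i^4$ is positive), so $f$ is strictly convex; the feasible set is the intersection of the affine hyperplane $\sum_i q_i = Q$ with the positive orthant, which is convex. Hence the unique KKT point is the unique global minimizer, and no boundary minimizer exists since $f(q_1,\ldots,q_K) \to \infty$ as any $q_i \to 0^+$.

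The only subtlety — not really an obstacle but worth flagging — is to make the connection with Theorem \ref{theorem:q} transparent: Theorem \ref{theorem:q} solves the dual problem of minimizing $Q = \sum_i q_i$ subject to a fixed variance, while this lemma minimizes the variance subject to a fixed $Q$. Both yield the same proportionality $q_i \propto w_i^{2/3}$; only the normalizing constant differs. The proof should therefore be phrased so it is clear that the $w_i^{2/3}$ scaling is the essential structural fact, and that the two results are equivalent up to the choice of normalization.
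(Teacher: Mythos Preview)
Your proposal is correct and follows essentially the same approach as the paper: invoke Lemma~\ref{lemma:varianceq} to obtain the closed form $\frac{t^2}{6}\sum_i w_i^2/q_i^2$, then apply Lagrange multipliers to the constraint $\sum_i q_i = Q$ to obtain $q_i \propto w_i^{2/3}$ and normalize. Your addition of the strict-convexity argument to upgrade the stationary point to the unique global minimizer is a welcome completion that the paper's proof leaves implicit, and your remark on the duality with Theorem~\ref{theorem:q} is accurate but optional here.
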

\begin{proof}
Define 

\begin{align*}
&f(\vq) = \E_{p_1\ldots p_K}[\Var(e^{q_1\ldots q_K}_p)] \\
&g(\vq) = \brace{\sum_{i=1}^n q_i} \\
&\Lagr\brace{\vq} = f(\vq) - \lambda g(\vq) \text{ (Lagrangian)}
\end{align*}

Any (local) minimum $\hat{\vq}$ satisfies
\begin{align*}
&\nabla \Lagr\brace{\hat{\vq}} = \mathbf{0} & & \\
\iff~ & \nabla \frac{t^2}{6} \sum_{i=1}^K \frac{w_i^2}{q_i^2} - \lambda \nabla \sum_{i=1}^K q_i = 0 \wedge \sum_{i=1}^K q_i = Q & & \text{Lemma \ref{lemma:varianceq}} \\
\iff~ & \forall i=1\ldots n.~ \frac{t^2}{-3} \frac{w_i^2}{q_i^3} = \lambda \wedge \sum_{i=1}^K q_i = Q & & \\
\iff~ & \forall i=1\ldots n.~ q_i = \sqrt[3]{\frac{t^2}{-3\lambda}w_i^2} \wedge \sum_{i=1}^K q_i = Q & & \\
\implies~ & \forall i=1\ldots n.~ q_i = Q\frac{w_i^{2/3}}{\sum_{j=1}^Kw_j^{2/3}}  & &
\end{align*}
\end{proof}

\subsection{Proof of \Cref{theorem:q}}

\begin{proof}
Using Lemma \ref{lem:minimumq}, it is straightforward to show that for any
$V$, $\min_{q_1\ldots q_K}\sum_{i=1}^{K}q_i$ subject to $\E_{p_1\ldots
p_K}[\Var(e^{q_1\ldots q_K}_p)] = V$ is minimized by $q_i = Cw_i^{2/3}$ for
the unique $C\in\R_{>0}$ that satisfies $\E_{p_1\ldots p_K}[\Var(e^{q_1\ldots q_K}_p)] = V$.

Then, taking $V = \E_{p_1\ldots p_K}[\Var(e^q_p)]$ and $C = \sqrt{\frac{a}{b}}$ (see \Cref{theorem:q}), we do indeed get
\begin{align*}
  &\E_{p_1\ldots p_K}[\Var(e^{q_1\ldots q_K}_p)] & & \\
  =~ & \frac{t^2}{6}\sum_{i=1}^{K} \frac{w_i^2}{{(Cw_i^{2/3})}^2} & & \text{Lemma \ref{lemma:varianceq}} \\
  =~ & \frac{1}{C^2}\frac{t^2}{6}\sum_{i=1}^{K} {w_i}^{2/3} & & \\
  =~ & \frac{\sum_{j=1}^K \frac{w_j^2}{q^2}}{\sum_{j=1}^K {w_j^{2/3}}}\frac{t^2}{6}\sum_{i=1}^{K} {w_i}^{2/3} \\
  =~ & \frac{t^2}{6}\sum_{j=1}^K \frac{w_j^2}{q^2} \\
  =~ & \E_{p_1\ldots p_K}[\Var(e^q_p)] & & \text{\cref{lemma:varianceq}}\
  \qedhere
  \end{align*}
\end{proof}

\end{document}